\documentclass[letterpaper, 10 pt, conference]{ieeeconf}  

\input{preamble}

\IEEEoverridecommandlockouts                              

\title{\LARGE \bf
GPU-Accelerated Continuous-Time Successive Convexification for Contact-Implicit Legged Locomotion
}

\author{Samuel C. Buckner$^{1}$ and Purnanand Elango$^{2*}$
\thanks{$^{1}$Department of Aeronautics and Astronautics, University of Washington, Seattle, WA 98195.}%
\thanks{S. C. Buckner interned at MERL during the development of this work.}%
\thanks{$^{2}$Mitsubishi Electric Research Laboratories, Cambridge, MA 02139.}%
\thanks{$^{*}$Corresponding author. Email: \texttt{elango@merl.com}}}

\begin{document}
\raggedbottom 

\maketitle
\thispagestyle{empty}
\pagestyle{empty}
%
\begin{abstract}
Contact-implicit trajectory optimization (CITO) enables the automatic discovery of contact sequences, but most methods rely on fine time discretization to capture all contact events accurately, which increases problem size and runtime while tying solution quality to grid resolution. We extend the recently proposed sequential convex programming (SCP) approach for trajectory optimization, continuous-time successive convexification (\ctscvx), to CITO by introducing integral cross-complementarity constraints, which eliminate the risk of missing contact events between discretization nodes while preserving the flexibility of contact mode changes. The resulting framework, contact-implicit successive convexification ({\ciscvx}), models full multibody dynamics in maximal coordinates, including stick-slip friction and partially elastic impacts. To handle complementarity constraints, we embed a backtracking homotopy scheme within SCP for reliable convergence. We implement this framework in a stand-alone Python software, leveraging JAX for GPU acceleration and a custom canonical-form parser for the convex subproblems of SCP to avoid the overhead of general-purpose modeling tools such as CVXPY. We demonstrate {\ciscvx} on diverse legged-locomotion tasks. In particular, we validate the approach in MuJoCo with the Gymnasium HalfCheetah model against the MuJoCo MPC baseline, showing that a tracking simulation with the optimized torque profiles from {\ciscvx} produces physically consistent trajectories with lesser energy consumption. We also show that the resulting software achieves faster solve times than existing state-of-the-art SCP implementations by over an order of magnitude, thereby demonstrating a practically important contribution to scalable real-time trajectory optimization.
\end{abstract}
\section{Introduction}
Trajectory optimization is a central tool for generating agile behaviors in legged robots \cite{wensing2023optimization}. Contact-implicit trajectory optimization (CITO) methods are particularly attractive because they eliminate the need to predefine contact schedules. However, most existing CITO formulations rely on fine time discretization to capture contact events. This increases the number of decision variables and constraints, coupling solution quality tightly to the discretization resolution and creating an unfavorable accuracy-runtime tradeoff \cite{patel2019contact}.

To address this limitation, we build on the sequential convex programming (SCP) approach, which iteratively approximates and solves nonconvex problems. While the usage of SCP has been addressed for CITO \cite{onol2019contact,onol2020tuning}, we instead consider the recently proposed continuous-time successive convexification ({\ctscvx}) framework \cite{elango2025ctscvx}, which notably enforces constraints in continuous time and thus permits high-fidelity, feasible solutions over coarse discretization grids. Our key idea is to extend continuous-time satisfaction to contact constraints, specifically cross-complementarity constraints \cite{baumrucker2009mpec}. We introduce an integral form of cross-complementarity constraints expressed in continuous time, which ensures that contact mode changes occur at discretization nodes while eliminating the risk of missing contact events between nodes. In conjunction with time-dilation \cite{elango2025ctscvx}, where nodes are allowed to stretch or shrink over time, the generality of the resulting contact mode sequence is ensured.

We formulate the full multibody dynamics of a robot in maximal coordinates rather than using reduced-order approximations such as SRBD \cite{wensing2023optimization} or centroidal dynamics \cite{orin2013centroidal}. Although maximal coordinates enlarge the state dimension, the associated dynamics and constraints are highly sparse, enabling efficient GPU parallelization. The model supports stick–slip friction, consistent with the maximum dissipation principle, and partially elastic impacts \cite{stewart2011dynamics}. To handle complementarity constraints, we employ a homotopy scheme with backtracking that tightens the relaxation during the SCP iterations, ensuring reliable convergence. We implement the approach in stand-alone Python software that leverages JAX for GPU-accelerated differentiation and linearization, together with a custom parser for the convex subproblems in SCP that avoids the overhead of general-purpose modeling tools such as CVXPY. We demonstrate the approach on two-dimensional legged robot models, including the Gymnasium HalfCheetah model \cite{towers2024gymnasium}.
\begin{figure*}[t]
    \centering
    \includegraphics[width=0.95\linewidth]{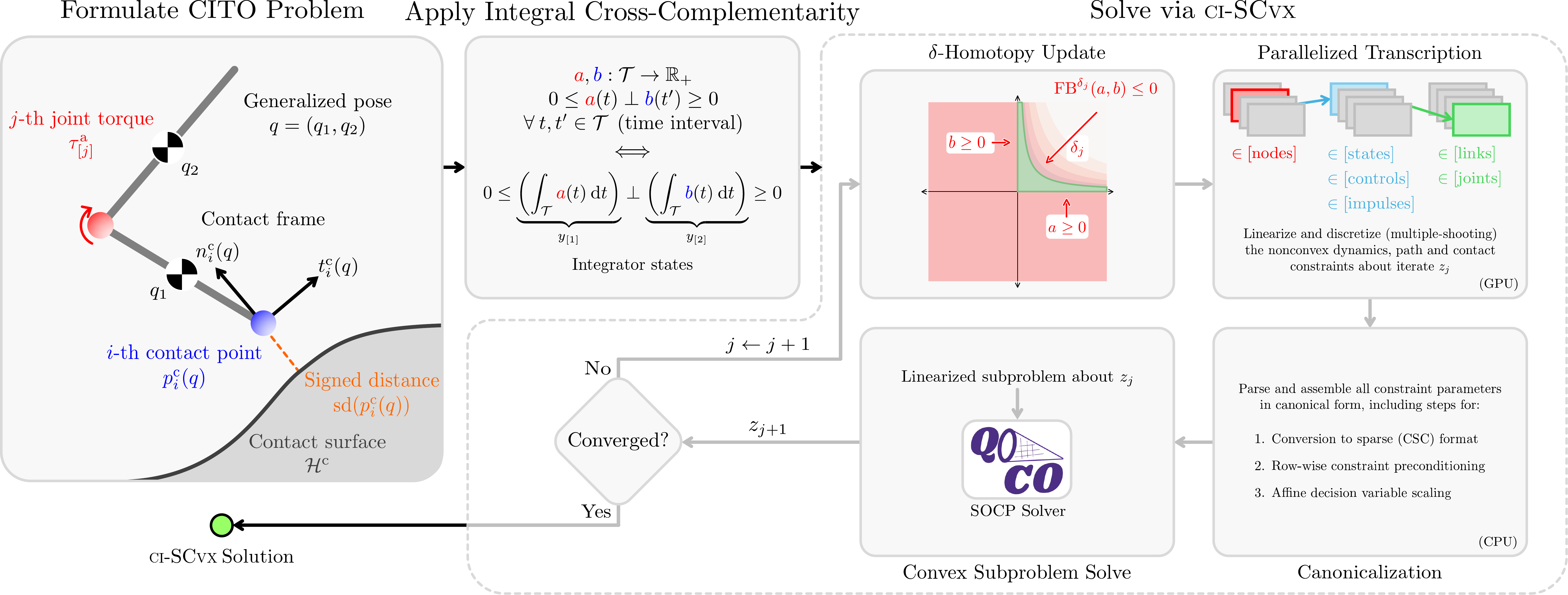}
    \caption{Overview of the proposed framework, including problem formulation, integral cross-complementarity and the \ciscvx\:algorithm.}
    \label{fig:architecture}
\end{figure*}
\subsection{Related Work}
CITO has been studied extensively to generate locomotion solutions without prescribing contact schedules. Early approaches used direct collocation with complementarity constraints to enforce non-penetration and frictional contact \cite{direct2013posa}, while contact-invariant optimization \cite{mordatch2012discovery} handled contact through smooth penalties that encourage sticking and end-effector alignment, with dynamics enforced softly. These methods typically require fine time discretization to capture contact events accurately, yielding large-scale problems whose solution quality depends strongly on the grid resolution. To mitigate the risk of missing contact events between nodes, cross-complementarity constraints were proposed \cite{baumrucker2009mpec}. More recent formulations such as FESD-J \cite{nurkanovic2024fesd-j} and moving finite elements \cite{kazi2025optimal} extend this idea by enabling exact switch detection for systems with impacts and Coulomb friction, ensuring that contact events occur at finite-element boundaries. However, these are still inherently discrete-time constructions.

Continuous-time trajectory optimization frameworks like {\ctscvx} \cite{elango2025ctscvx} enforce constraints in continuous time and thereby achieve accurate solutions with coarse discretization, but they have not been extended to CITO. In terms of modeling, many works adopt reduced-order approximations such as centroidal dynamics \cite{orin2013centroidal} or single-rigid-body dynamics (SRBD) \cite{wensing2023optimization}. While these simplifications can reduce the size of optimization problems and sometimes eliminate the risk of missing contact events \cite{winkler2018gait}, they often neglect whole-body effects and can lead to conservative solutions that underutilize the system's full range of motion.

\subsection{Contributions}
This work introduces a continuous-time CITO framework for locomotion tasks and makes the following contributions:
\begin{enumerate}
\item The contact-implicit successive convexification ({\ciscvx}) framework, an SCP-based approach for CITO that ensures continuous-time constraint satisfaction.
\item Integral cross-complementarity constraints that ensure contact mode changes occur at discretization nodes. Combined with time-dilation \cite{elango2025ctscvx}, this preserves generality in contact mode sequences.
\item A backtracking homotopy scheme that progressively tightens the complementarity constraint relaxation within SCP iterations, ensuring reliable convergence.
\item A stand-alone Python software that leverages JAX for GPU-accelerated differentiation and linearization, together with a custom parser for the convex subproblems in SCP that avoids the overhead of general-purpose modeling tools such as CVXPY.
\item Demonstration on two-dimensional locomotion tasks, including the Gymnasium HalfCheetah model \cite{towers2024gymnasium}, with comparisons to a state-of-the-art baseline.
\end{enumerate}
%
\section{Dynamical Model with Contact and Impact}\label{sec:system-model}
We adopt the following notation for the remainder of the development. Concatenation of vectors $u\in\bR^n$ and $v\in\bR^m$ is $(u,v)\in\bR^{n+m}$. The scalar components of $v\in\bR^n$ are denoted by $v_{[i]}$, for $i=1,\ldots,n$. The ramp (ReLU) function is denoted by $\absplus{x}$. Smooth pseudo-Huber approximations of the Euclidean norm, absolute value, and ramp functions with unit steepness parameter are denoted by $\snorm{x}$, $\sabs{x}$, and $\sabsplus{x}$. For vector arguments, $\sabsplus{x}$ is understood componentwise.

We use maximal coordinates to model the dynamics of a multi-link robot. Let $q\in\bR^{n^q}$ denote the generalized position (pose) and $v\in\bR^{n^q}$ the generalized velocity. The equations of motion are obtained by applying Newton’s second law to each link, accounting for forces and torques from gravity, joint reaction forces, joint torques, joint stiffness and damping, and contact forces. The robot has $\ncontacts$ contact points with positions, expressed in the inertial (world) frame, given by $\posc{i}(q)\in\bR^{\ndim}$ for $i=1,\ldots,\ncontacts$. The Jacobian of the $i$th contact point (the contact Jacobian) is $\Jc{i}(q)=\nabla \posc{i}(q)$. 

Contact occurs on a smooth surface
$\surfc = \{p\in\bR^{\ndim}\,|\,\surfcfun(p)=0\}$,
where $\surfcfun$ is a continuously differentiable function and $n^\mathrm{d}\in\{2,3\}$ is the dimension of the position space. At each contact point, we define a contact frame consisting of the unit surface normal
$\normalc{i}(q)=\nabla \signdist(\posc{i}(q))^\top \in \bR^{\ndim}$ and $(\ndim-1)$ unit tangent vectors given by the columns of $\tangentc{i}(q)\in\bR^{\ndim\times(\ndim-1)}$. Together, the columns of $\tangentc{i}(q)$ and $\normalc{i}(q)$ form a right-handed basis. The rotation matrix that maps vectors from the contact frame to the world frame is
$\rotmatc{i}(q) = [\,\tangentc{i}(q)\,\ \normalc{i}(q)\,]$. Fig. \ref{fig:architecture} shows the frame definitions. The contact force at the $i$th contact point, represented in the contact frame, is
$\phic{i}=(\phict{i},\phicn{i})$, where
$\phicn{i}\in\bR$ is the normal component along $\normalc{i}(q)$ and
$\phict{i}\in\bR^{\ndim-1}$ is the tangential component.
Similarly, the contact impulse due to impact is
$\Phic{i}=(\Phict{i},\Phicn{i})$ with the same component-wise interpretation. 

Let $\jointcnstr(q)=\zeros{\njoints}$ impose the kinematic constraints that connect links at the joints and restrict degrees of freedom according to joint type. The effect of joint reaction forces on the generalized acceleration $\dot{v}$ is captured by $\nabla\jointcnstr(q)^\top\jointforce$, where $\jointforce$ concatenates all joint reaction forces in the world frame. An impact at any contact point induces a joint reaction impulse that instantaneously changes the generalized velocity via $\nabla\jointcnstr(q)^\top\jointimpulse$, where $\jointimpulse$ concatenates joint reaction impulses in the world frame.
%
\subsection{Continuous-Time Model}
The equations of motion are given by
\begin{subequations}
\begin{align}
    \dot{q} ={} & v,\\
    M(q)\dot{v} ={} & W(q,v,\jointtorque{},\gravity) + \Jc{}(q)^\top\rotmatc{i}(q)\phic{} \label{eq:ct-dynamics:acc}\\
    & + \nabla\jointcnstr(q)^\top\jointforce,\notag
\end{align}\label{eq:ct-dynamics}%
\end{subequations}
where $\Jc{}(q)^\top$ $=$ $[\Jc{1}(q)^\top$ $\cdots$ $\Jc{\ncontacts}(q)^\top]$, $\rotmatc{}(q)$ $=$ $\mr{blockdiag}(\rotmatc{1}(q),\ldots,\rotmatc{\ncontacts}(q))$, and $\phic{}$ $=$ $(\phic{1},\ldots,\phic{\ncontacts})$. $M(q)$ is the mass matrix, and $W(q,v,\jointtorque{},\gravity)$ collects the generalized forces and torques due to gravity (with $\gravity$ the gravitational acceleration), joint torques $\jointtorque{}$, and joint stiffness and damping. To model physically realistic contact, we require (i) a no-penetration condition expressed as a complementarity constraint between the normal contact force and signed distance to the contact surface, and (ii) a tangential contact force determined by the maximum dissipation principle subject to a friction cone constraint
\begin{subequations}
\begin{align}
    & 0\le \phicn{} \perp \signdist(\posc{i}(q)) \ge 0,\label{eq:no-penetration}\\
    & \phict{} \in \underset{\phi^{\mr{t}}}{\mr{argmin}}~\phi^{\mr{t}}{}^\top \tangentcvel{i}(q,v)~~\mr{s.t.}~\norm{\phi^{\mr{t}}}\le\mu_i\phicn{i},\label{eq:max-diss}
\end{align}\label{eq:no-penetration-max-diss}%
\end{subequations}
for $i=1,\ldots,\ncontacts$, where $\mu_i$ is the friction coefficient and $\tangentcvel{i}(q,v)=\tangentc{i}(q)^\top\Jc{i}(q)v$ is the tangential velocity at the $i$th contact. The first-order optimality conditions for \eqref{eq:max-diss} are%
\begin{subequations}
\begin{align}
& \zeros{\ndim-1} = \tangentcvel{i}(q,v)+2\gamma_i\phict{i},\label{eq:max-diss-tangent-contact-force-KKT:stationarity}\\
& 0 \le \gamma_i \perp (\mu_i\phicn{i})^2-|\phict{i}|^2 \ge 0,
\end{align}\label{eq:max-diss-tangent-contact-force-KKT}%
\end{subequations}
where $\gamma_i$ is the Lagrange multiplier and we denote $\gamma=(\gamma_1,\ldots,\gamma_{\ncontacts})$. The friction cone constraint is written in squared form to ensure differentiability at the boundary. When the $i$th contact is active (i.e., signed distance is zero), $\gamma_i$ is a continuous function of time. Two issues arise with \eqref{eq:max-diss-tangent-contact-force-KKT}: (i) the $\norm{\cdot}^2$ term causes numerical ill-conditioning, and (ii) when the $i$th contact is inactive (i.e., signed distance is positive), $\phic{i}=\zeros{\ndim}$, so \eqref{eq:max-diss-tangent-contact-force-KKT:stationarity} reduces to the non-physical case that the contact point's tangential velocity is zero, i.e., $\tangentc{i}(q)^\top\Jc{i}(q)v=\zeros{\ndim-1}$. We address both by rewriting \eqref{eq:max-diss-tangent-contact-force-KKT} as%
\begin{subequations}
\begin{align}
    & \zeros{\ndim-1} = \phicn{i}\big(\tangentcvel{i}(q,v) + \gamma_i\nabla\snorm{\phict{i}}^\top\big), \label{eq:max-diss-tangent-contact-force-KKT-reformulate:stationarity}\\
    & 0 \le \gamma_i \perp \sabs{\mu_i\phicn{i}}-\snorm{\phict{i}} \ge 0,
\end{align}\label{eq:max-diss-tangent-contact-force-KKT-reformulate}%
\end{subequations}
using the smooth norm $\snorm{\cdot}$ and smooth absolute value $\sabs{\cdot}$. The multiplication of $\phicn{i}$ in \eqref{eq:max-diss-tangent-contact-force-KKT-reformulate:stationarity} enforces stationarity only when contact is active, i.e., normal contact force is positive. Henceforth, we denote $\lambda^{\mr{c}}_i(q,v,\phic{i},\gamma_i) = \tangentcvel{i}(q,v) + \gamma_i\nabla\snorm{\phict{i}}^\top$ and $\rho^\mr{c}_i(\phic{i},\gamma_i) = \sabs{\mu_i\phicn{i}}-\snorm{\phict{i}}$.

Next, we eliminate the explicit joint reaction force in \eqref{eq:ct-dynamics:acc} by enforcing joint constraints at the acceleration level
\begin{subequations}
\begin{align}
    & \jointcnstr(q) = \zeros{\njoints} \implies \nabla\jointcnstr(q)v = \zeros{\njoints},  \label{eq:joint-cnstr-pos-vel}\\
    \implies & \nabla\jointcnstr(q)\dot{v} + (\ldots\,v^\top\nabla^2\jointcnstr_{[i]}(q)v\,\ldots) = \zeros{\njoints},\label{eq:joint-cnstr-acc}
\end{align}
\end{subequations}
where $\jointcnstr_{[i]}(q)$ is the $i$th component of $\jointcnstr$, for $i=1,\ldots,\njoints$. From \eqref{eq:joint-cnstr-acc} and \eqref{eq:ct-dynamics:acc} we solve for $\jointforce$ and substitute into \eqref{eq:ct-dynamics:acc}.

We rewrite \eqref{eq:ct-dynamics} in state-space form as $\dot{x}=f(x,u)$, with state $x=(q,v)\in\bR^{n^x}$, control input $u=(\jointtorque{},\phic{},\gamma)\in\bR^{n^u}$, and dimensions $n^x=2n^q$ and $n^u=\njointact+\ncontacts(\ndim+1)$. The state and control input are subject to path constraints $g(x(t),u(t))\le\zeros{n^g}$ for all $t\in\tspan$, which include, for each $i=1,\ldots,\ncontacts$, nonnegativity constraints on $\phicn{i}$, $\signdist(\posc{i}(q))$, $\gamma_i$, and $\rho^{\mr{c}}_i(\phic{i},\gamma_i)$, together with general constraints on state and input (e.g., obstacle avoidance, kinematic limits).
\subsection{Impulsive Impact Model}
We consider impulsive impacts occurring at discrete instants. Let an impact occur at time $\hat{t}$. Stack the contact impulses as $\Phic{}=(\Phic{1},\ldots,\Phic{\ncontacts})$ with $\Phic{i}=(\Phict{i},\Phicn{i})$, and let $\jointimpulse$ denote the joint reaction impulse. The impact dynamics are
\begin{subequations}
\begin{align}
 & M(q(\hat{t})) (v(\hat{t}^+) - v(\hat{t}^-)) = \Jc{}(q(\hat{t}))^\top\rotmatc{}(q(\hat{t}))\Phic{}\label{eq:impulse-dynamics:velocity-change}\\
 & \phantom{M(q(\hat{t})) (v(\hat{t}^+) - v(\hat{t}^-)) ={}}{}+\nabla \jointcnstr(q(\hat{t}))^\top\jointimpulse, \notag\\
 & 0 \le \Phicn{i} \perp \signdist(\posc{i}(q(\hat{t}))) \ge 0, \label{eq:impulse-dynamics:no-penetration}\\
 & 0 = \Phicn{i}\normalc{i}(q(\hat{t}))^\top \Jc{i}(q(\hat{t}))(v(\hat{t}^+)+\epsrest{i}v(\hat{t}^-)), \label{eq:elasticity-impact}\\
 & 0 = \Phicn{i}\left(\tangentcvel{i}(q(\hat{t}),v(\hat{t}^+)) + \Gamma_i\nabla\snorm{\Phict{i}}^\top\right), \label{eq:max-diss-impulse-KKT:stationarity}\\
 & 0 \le \Gamma_i \perp \sabs{\mu_i\Phicn{i}} - \snorm{\Phict{i}} \ge 0,\label{eq:impulse-friction-cone}\\
 & \qquad i = 1,\ldots,\ncontacts.\notag
\end{align}\label{eq:impulse-dynamics}%
\end{subequations}
Conditions \eqref{eq:max-diss-impulse-KKT:stationarity}–\eqref{eq:impulse-friction-cone} play the same role as \eqref{eq:max-diss-tangent-contact-force-KKT-reformulate}. They determine the tangential contact impulse via a stationarity condition (multiplied by normal contact impulse) and the friction cone constraint, with $\Gamma_i$ the associated Lagrange multiplier. Impact elasticity is governed by \eqref{eq:elasticity-impact}, where $\epsrest{i}$ is the coefficient of restitution at the $i$th contact.

We remove the explicit joint reaction impulse in \eqref{eq:impulse-dynamics:velocity-change} by imposing the joint constraint post impact at the velocity level,
$\nabla\jointcnstr(q(\hat{t}^+))\,v(\hat{t}^+)=\zeros{\njoints}$.
Solving the resulting constraint and \eqref{eq:impulse-dynamics:velocity-change} yields an explicit expression for $\jointimpulse$, which we substitute back into \eqref{eq:impulse-dynamics:velocity-change}.
%
\section{Integral Cross Complementarity}\label{sec:intg-cross-complementarity}
We seek a solution to \eqref{eq:ct-dynamics} and \eqref{eq:impulse-dynamics} over a time horizon $\tspan$. We discretize $\tspan$ into a grid of $N$ nodes, $0=t_1,\ldots,t_N=\tfinal$, and we isolate all contact mode switches to occur only at the grid nodes. These contact modes indicate either no contact, sticking contact or slipping contact.

First, we introduce the following continuous-time cross-complementarity condition. Let $a,b:\tspan\to\bR_+$ be piecewise continuous with any discontinuities confined to the grid. Specifically, for each $k=1,\ldots,N-1$, $a$ and $b$ are continuous on $(t_k,t_{k+1}]$. They satisfy pointwise-in-time complementarity
\begin{align}
    0 \le a(t) \perp b(t) \ge 0,\quad \forall\,t\in\tspan. \label{eq:a-b-complementarity}
\end{align}
To ensure switches between $a>0$, $b=0$ and $a=0$, $b>0$ happen at the grid nodes, we impose, for each $k=1,\ldots,N-1$,
\begin{align}
    0 \le a(t) \perp b(t^\prime) \ge 0,\quad \forall\,t,t^\prime\in (t_k,t_{k+1}], \label{eq:ct-cross-complementarity}
\end{align}
together with $0\le a(t_1)\perp b(t_1)\ge 0$. We then have the following continuous-time analogue of \cite[Lem.~2]{nurkanovic2024fesd-j}.
\begin{lemma}\label{lem:ct-cross-complementarity}
Let \eqref{eq:ct-cross-complementarity} hold and fix $k\in\{1,\ldots,N-1\}$. If $a(t)>0$ for some $t\in(t_k,t_{k+1}]$, then $b(t^\prime)=0$ for all $t^\prime\in(t_k,t_{k+1}]$. Conversely, if $b(t^\prime)>0$ for some $t^\prime\in(t_k,t_{k+1}]$, then $a(t)=0$ for all $t\in(t_k,t_{k+1}]$.
\end{lemma}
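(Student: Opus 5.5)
The lemma follows almost directly from the meaning of the cross-complementarity condition \eqref{eq:ct-cross-complementarity}, so I will argue straight from it and not invoke any earlier result. First I make explicit what \eqref{eq:ct-cross-complementarity} says for fixed $k$. It asserts that $a(t)\ge 0$ and $b(t^\prime)\ge 0$ for all $t,t^\prime\in(t_k,t_{k+1}]$. It also asserts the orthogonality $a(t)\,b(t^\prime)=0$ for every pair $(t,t^\prime)$ in $(t_k,t_{k+1}]\times(t_k,t_{k+1}]$. Because $a$ and $b$ are scalar-valued here, orthogonality is just vanishing of the product. If one instead wants vector-valued $a,b$ with componentwise complementarity, the same argument applies component by component.

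For the first claim, I fix $k$ and suppose $a(\bar t)>0$ for some $\bar t\in(t_k,t_{k+1}]$. I then take an arbitrary $t^\prime\in(t_k,t_{k+1}]$ and apply \eqref{eq:ct-cross-complementarity} to the pair $(\bar t,t^\prime)$, which gives $a(\bar t)\,b(t^\prime)=0$. Dividing by $a(\bar t)>0$ yields $b(t^\prime)=0$. Since $t^\prime$ was arbitrary, $b$ vanishes on the whole interval.

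The converse is the symmetric argument. I suppose $b(\bar t^\prime)>0$ and take any $t\in(t_k,t_{k+1}]$. The pair $(t,\bar t^\prime)$ gives $a(t)\,b(\bar t^\prime)=0$, and hence $a(t)=0$.

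I do not expect a real obstacle; the only point needing care is to state the quantifiers correctly. The condition \eqref{eq:ct-cross-complementarity} has to be read as holding for all pairs $(t,t^\prime)$ independently, not only on the diagonal $t=t^\prime$. I will also remark that the proof needs neither the continuity of $a,b$ on $(t_k,t_{k+1}]$ nor the condition at $t_1$. Those assumptions matter only for the later interpretation that switches occur at grid nodes, which is the continuous-time analogue of \cite[Lem.~2]{nurkanovic2024fesd-j}.
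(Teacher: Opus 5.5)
Your proof is correct and essentially matches the paper's: the paper argues by contradiction that $a(t)>0$ and $b(t^\prime)>0$ at two points of the same interval $(t_k,t_{k+1}]$ would violate \eqref{eq:ct-cross-complementarity}, which is your pairwise-product argument phrased indirectly. Your remarks are also accurate: the condition must be read over all pairs $(t,t^\prime)$, and neither continuity nor the condition at $t_1$ is used for this lemma.
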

\begin{proof}
If $a(t)>0$ at some $t\in(t_k,t_{k+1}]$ and $b(t^\prime)>0$ at some $t^\prime\in(t_k,t_{k+1}]$, then \eqref{eq:ct-cross-complementarity} is violated, which is a contradiction.
\end{proof}
While \eqref{eq:ct-cross-complementarity} localizes switches to the grid nodes (a desirable property for both numerical reliability and physical realism), it is still a pointwise-in-time constraint and thus numerically intractable. Existing approaches \cite{baumrucker2009mpec,nurkanovic2024fesd-j,kazi2025optimal} enforce discrete-time surrogates of \eqref{eq:ct-cross-complementarity} in aggregated forms, which risks inter-sample violations. To avoid this, we propose an integral reformulation, referred to as an \emph{integral cross-complementarity} condition.
\begin{proposition}
For each $k=1,\ldots,N-1$, the constraint
\begin{align}
    0 \le \left(\int_{t_k}^{t_{k+1}} a(t)\,\mr{d}t\right) \perp \left(\int_{t_k}^{t_{k+1}} b(t)\,\mr{d}t\right) \ge 0,
    \label{eq:intg-cross-complementarity}
\end{align}
is equivalent to \eqref{eq:ct-cross-complementarity}.
\end{proposition}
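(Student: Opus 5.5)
The plan is to reduce both directions to one elementary fact: a nonnegative function that is continuous on $(t_k,t_{k+1}]$ has zero integral over $[t_k,t_{k+1}]$ if and only if it vanishes identically on $(t_k,t_{k+1}]$. Throughout, fix $k\in\{1,\ldots,N-1\}$ and write $A_k=\int_{t_k}^{t_{k+1}}a(t)\,\mr{d}t$ and $B_k=\int_{t_k}^{t_{k+1}}b(t)\,\mr{d}t$. Since $a,b\ge0$, we have $A_k,B_k\ge0$ automatically. Hence \eqref{eq:intg-cross-complementarity} is equivalent to $A_kB_k=0$, that is, to $A_k=0$ or $B_k=0$. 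Likewise, since $a,b\ge 0$, condition \eqref{eq:ct-cross-complementarity} is equivalent to $a(t)\,b(t')=0$ for all $t,t'\in(t_k,t_{k+1}]$.

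\emph{Step 1: \eqref{eq:ct-cross-complementarity} implies \eqref{eq:intg-cross-complementarity}.} One route is to integrate the identity $a(t)b(t')=0$ over $(t,t')\in[t_k,t_{k+1}]^2$. By Fubini–Tonelli (the integrand is nonnegative and piecewise continuous, hence measurable), the double integral factors as $A_kB_k$, which is therefore zero. Alternatively, Lemma~\ref{lem:ct-cross-complementarity} gives the same conclusion directly. Either $a\equiv0$ on $(t_k,t_{k+1}]$, so $A_k=0$ (the single point $t_k$ does not affect the integral). Or some $a(t)>0$, in which case the lemma gives $b\equiv 0$ on the interval, so $B_k=0$.

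\emph{Step 2: \eqref{eq:intg-cross-complementarity} implies \eqref{eq:ct-cross-complementarity}.} Suppose, say, $A_k=0$; the case $B_k=0$ is symmetric. I will show $a\equiv0$ on $(t_k,t_{k+1}]$, after which $a(t)b(t')=0$ holds for every pair $t,t'$ in the interval. Suppose instead that $a(\bar t)=\varepsilon>0$ for some $\bar t\in(t_k,t_{k+1}]$. Continuity of $a$ on $(t_k,t_{k+1}]$ gives $\delta>0$ such that $a>\varepsilon/2$ on $I:=(\bar t-\delta,\bar t+\delta)\cap(t_k,t_{k+1}]$. The set $I$ has positive length $\ell>0$, because $\bar t>t_k$ and the interval extends at least to the left of $\bar t$. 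Nonnegativity of $a$ then gives $A_k\ge \varepsilon\ell/2>0$, a contradiction.

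The main point of care is Step 2 at the boundary of the interval. Continuity is assumed only on the half-open interval $(t_k,t_{k+1}]$, so the neighborhood argument must use a one-sided (left) neighborhood when $\bar t=t_{k+1}$. The value at $t_k$ is neither constrained by \eqref{eq:ct-cross-complementarity} nor seen by the integral, so it causes no inconsistency. That value belongs to the previous interval $(t_{k-1},t_k]$; for $k=1$ it is governed by the separately imposed condition $0\le a(t_1)\perp b(t_1)\ge0$. This also explains why discontinuities confined to the grid are harmless: on each open-left cell the functions are continuous, which is exactly what the positivity-of-integral argument needs.
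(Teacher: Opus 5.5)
Your proof is correct and follows essentially the paper's argument. The direction from \eqref{eq:ct-cross-complementarity} to \eqref{eq:intg-cross-complementarity} goes through Lemma~\ref{lem:ct-cross-complementarity} as in the paper (your Fubini--Tonelli factorization is an optional alternative). The converse rests on the same fact: a nonnegative function continuous on $(t_k,t_{k+1}]$ with zero integral vanishes there. Splitting on which integral is zero, rather than on which is positive as the paper does, also covers the case where both integrals vanish. It also avoids the paper's overstatement that $a(t)>0$ for all $t\in(t_k,t_{k+1}]$, and your one-sided neighborhood at $t_{k+1}$ makes the continuity step rigorous.
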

\begin{proof}
($\Rightarrow$) If \eqref{eq:intg-cross-complementarity} holds and $\int_{t_k}^{t_{k+1}} a(t)\,\mr{d}t>0$, then the complementarity of the integrals implies $\int_{t_k}^{t_{k+1}} b(t)\,\mr{d}t=0$. Since $a$ and $b$ are nonnegative and are continuous on $(t_k,t_{k+1}]$, it follows that $a(t)>0$ and $b(t) = 0$ for all $t \in (t_k,t_{k+1}]$. The argument is symmetric if $\int_{t_k}^{t_{k+1}} b(t)\mr{d}t>0$. Therefore, \eqref{eq:ct-cross-complementarity} holds.

($\Leftarrow$) If \eqref{eq:ct-cross-complementarity} holds, Lemma~\ref{lem:ct-cross-complementarity} implies that for each $k=1,\ldots,N-1$, either $a(t) = 0$ or $b(t) = 0$ (or both) for all $t\in(t_k,t_{k+1}]$. Hence one (or both) of the integrals is zero, yielding \eqref{eq:intg-cross-complementarity}.
\end{proof}
In practice, we compute the integrals in \eqref{eq:intg-cross-complementarity} by integrating an auxiliary dynamical system
\begin{align}
    \dot{y} = (\dot{y}_{[1]},\dot{y}_{[2]})= (a,b),\label{eq:aux-dynamics-eg}
\end{align}
and then impose the complementarity constraint
\begin{align}
0 \le y_{[1]}(t_{k+1}) - y_{[1]}(t_k)\perp y_{[2]}(t_{k+1}) - y_{[2]}(t_k) \ge 0.\label{eq:aux-system-complementarity-eg}%
\end{align}
If $a$ or $b$ are not inherently nonnegative, we enforce their nonnegativity through the path constraint function $g$. In \eqref{eq:ct-dynamics}, to ensure that contact mode switches occur only at the grid nodes, we impose integral cross-complementarity between the following pairs: (i) $\phicn{i}$ and $\signdist(\posc{i}(q))$, (ii) $\phicn{i}$ and $\lambda_i^{\mr{c}}(q,v,\phic{i},\gamma_i)$, and (iii) $\gamma_i$ and $\rho_i^{\mr{c}}(\phic{i},\gamma_i)$.
\begin{remark}
Discrete-time cross-complementarity formulations (e.g., FESD-J \cite{nurkanovic2024fesd-j}) typically introduce additional cross-complementarity constraints to detect sign changes in the tangential velocity at each contact point. Our formulation obviates these by parameterizing the control input $u$ on each interval $(t_k,t_{k+1}]$, $k=1,\ldots,N-1$, with $\mc{C}^1$ basis functions. Consequently, $\lambda_i^{\mr{c}}$ is also $\mc{C}^1$ on $(t_k,t_{k+1}]$, which precludes nonsmoothness of the tangential velocity between nodes.
\end{remark}
%
\section{Contact-Implicit Trajectory Optimization Problem}
In this section, we formulate the CITO problem for the maximal-coordinate model of Section~\ref{sec:system-model}. The key steps are: (i) constructing an auxiliary system to handle path constraints and integral cross-complementarity, (ii) forming the time-dilated augmented system, (iii) parameterizing the control input and the dilation factor, and (iv) applying a smooth relaxation of the complementarity constraints. We employ time-dilation \cite{kamath2023real-time,elango2025ctscvx} so that the time-discretization associated with integral cross complementarity does not constrain the contact mode sequence (e.g., gait cadence and pattern).
%
\subsection{Auxiliary System}
We first construct the auxiliary system, which serves two purposes: (i) encoding cumulative violation of path constraints \cite[Lem. 2]{elango2025ctscvx}, and (ii) representing the integral cross-complementarity constraints (see \eqref{eq:aux-dynamics-eg}). The auxiliary dynamics are%
\begin{align}
    \dot{y} &=
    \begin{bmatrix}
        \vdots\\[-0.1cm]
        \dot{y}_{[4(i-1)+j]}\\[-0.1cm]
        \vdots\\[-0.25cm]\hdotsfor{1}\\[-0.225cm]
        \vdots\\[-0.1cm]
        \dot{y}_{[4\ncontacts+k]}\\[-0.1cm]
        \vdots
    \end{bmatrix}
    =
    \begin{bmatrix}
        \vdots\\
        \phicn{i}\\
        \lambda^{\mr{c}}_i(q,v,\phicn{i},\gamma_i)\\
        \gamma_i\\
        \rho^{\mr{c}}_i(\phic{i},\gamma_i)\\[-0.1cm]\vdots\\[-0.25cm]\hdotsfor{1}\\[-0.05cm]
        M^g\,\sabsplus{g(x,u)}
    \end{bmatrix}
    = \Omega(x,u),
    \label{eq:aux-dynamics}
\end{align}
where $i=1,\ldots,\ncontacts$, $j=1,\ldots,4$, $k=1,\ldots,N^g$, and $M^g\in\bR^{N^g\times n^g}$ is a nonnegative ``mixing'' matrix that allocates the $n^g$ components of $\sabsplus{g(x,u)}$ across $N^g$ components of the auxiliary state. Let $n^y=4\ncontacts+N^g$ denote the dimension of the auxiliary state. Continuous-time satisfaction of the path constraints on each grid interval $(t_k,t_{k+1}]$ is enforced by
$y_{[4\ncontacts+i]}(t_{k+1})=y_{[4\ncontacts+i]}(t_k)$ for $i=1,\ldots,N^g$ and $k=1,\ldots,N-1$. To preserve constraint qualification, we relax this to
\begin{align}
    y_{[4\ncontacts+i]}(t_{k+1})-y_{[4\ncontacts+i]}(t_k)\le \epsilon,\label{eq:ctcs-relax}
\end{align}
with a positive tolerance $\epsilon$. With appropriate scaling and preconditioning of the resulting CITO problem, $\epsilon$ can be chosen as a small but numerically meaningful value (e.g., $\sim 10^{-4}$), which leads to negligible continuous-time constraint violation. See \cite{elango2025ctscvx} for further discussion and analysis.
%
\subsection{Parameterized Time-Dilated Augmented System}
We define the time-dilated augmented system on the normalized interval $[0,1]$ with augmented state $\tilde{x}=(x,y)$ and augmented control $\tilde{u}=(u,s)$
\begin{align}
    \derv{\tilde{x}} = s\,\big(f(x,u),\,\Omega(x,u)\big) = \tilde{f}(\tilde{x},\tilde{u}),
    \label{eq:aug-dynamics}
\end{align}
where $\derv{\square}=\mr{d}\square/\mr{d}\tau$ for $\tau\in[0,1]$, and $s=\mr{d}t/\mr{d}\tau$ is the dilation factor, treated as a control input. Let $n^{\tilde{x}}=n^x+n^y$ and $n^{\tilde{u}}=n^u+1$.

We form an integral representation of \eqref{eq:aug-dynamics} over each grid interval from Section \ref{sec:intg-cross-complementarity}. First, for each $k=1,\ldots,N-1$, we map $[0,1]$ to $[t_k,t_{k+1}]$, and parameterize the augmented control on $[0,1]$ as
\[
\tilde{u}(\tau) = \big(\nu(\tau)U,\,S\big),
\]
where $\nu:[0,1]\to\bR^{\,n^u\times n^uN^u}$ is a matrix-valued polynomial basis function, and $U\in\bR^{n^uN^u}$ and $S\in\bR$ are the corresponding coefficients. Then the integral form of \eqref{eq:aug-dynamics} over $[0,1]$ is
\begin{align}
    \tilde{F}(\tilde{x}^-,U,S)
    = \tilde{x}^- + \int_{0}^{1} \tilde{f}\big(\tilde{x}(\tau),\,(\nu(\tau)U,S)\big)\,\mr{d}\tau,\label{eq:intg-aug-sys-param}
\end{align}
for $\tilde{x}^-\in\bR^{n^{\tilde{x}}}$, $U\in\bR^{n^uN^u}$, and $S\in\bR$.
%
\subsection{Nonconvex Optimization Problem}
We can now cast the CITO problem as a nonconvex optimization problem. A key aspect of most existing approaches for CITO is a smooth relaxation of the complementarity constraints. Complementarity is inherently nonsmooth: for scalars $a$ and $b$ the condition $0\le a\perp b\ge 0$ is equivalent to $\min(a,b)=0$, i.e., feasible set is the nonnegative coordinate axes. We employ the well-known $\mathcal{C}^1$ Fischer–Burmeister function to relax a complementarity constraint
\begin{align}
    \mr{FB}^\delta(a,b) \le 0,~a\ge 0,~b\ge 0,\label{eq:FB-relax}
\end{align}
where $\mr{FB}^\delta(a,b)=a+b-\sqrt{a^2+b^2+\delta}$ and $\delta>0$ controls the tightness of the relaxation. As $\delta\to 0$ the feasible set of \eqref{eq:FB-relax} approaches that of $\min(a,b)=0$. Note, any other $\mc{C}^1$ nonlinear complementarity function could be used in place of Fischer–Burmeister.

The decision variables of the CITO problem are:
(i) pre- and post-impact augmented states at the grid nodes
$\tilde{x}_1^-,\tilde{x}_1^+,\ldots,\tilde{x}_N^-,\tilde{x}_N^+$,
(ii) coefficients parameterizing the augmented control input on each grid interval
$U_1,S_1,\ldots,U_{N-1},S_{N-1}$, and
(iii) contact impulses and associated multipliers at the nodes
$\Phic{1},\Gamma_1,\ldots,\Phic{N},\Gamma_N$.
Here $\tilde{x}_k^\square=(x_k^\square,y_k^\square)$ with $x_k^\square=(q_k^\square,v_k^\square)$,
$\Phic{k}=(\Phic{k,1},\ldots,\Phic{k,\ncontacts})$ with 
$\Phic{k,i}=(\Phict{k,i},\Phicn{k,i})$, and
$\Gamma_k=(\Gamma_{k,1},\ldots,\Gamma_{k,\ncontacts})$,
for $\square\in\{{\scriptstyle+,-}\}$, $k=1,\ldots,N$, and $i=1,\ldots,\ncontacts$.

The CITO problem comprises of the following constraints. First, the augmented dynamics is enforced in integral form \eqref{eq:intg-aug-sys-param}, for $k=1,\ldots,N-1$. Next,  \eqref{eq:impulse-dynamics:velocity-change} is imposed at each grid node to allow potential impacts, along with continuity of pose and auxiliary state across impact, via
$\tilde{G}(\tilde{x}_k^+,\tilde{x}_k^-,\Phic{k})=\zeros{n^{\tilde{x}}}$, for $k=1,\ldots,N$. Next, we require the parameterization coefficients to lie in convex sets, i.e.,
$U_k\in\mc{U}$ and $S_k\in\mc{S}$, for $k=1,\ldots,N-1$. Set $\mc{U}$ encodes joint torque bounds, nonnegativity of the multiplier $\gamma$, and the contact force friction cone. Set $\mc{S}$ is a positive compact interval. Next, integral cross complementarity, relaxed via \eqref{eq:FB-relax}, and the relaxed path constraint satisfaction \eqref{eq:ctcs-relax} are aggregated as $\Psi^{\delta,\epsilon}(y_{k+1}^-,y_k^+)\le \zeros{n^\Psi}$, for $k=1,\ldots,N-1$. Next, the conditions \eqref{eq:impulse-dynamics:no-penetration}-\eqref{eq:impulse-friction-cone} are collected into $\Theta^\delta(x_k^+,x_k^-,\Phic{k},\Gamma_k)\le \zeros{n^\Theta}$ using relaxation \eqref{eq:FB-relax}. Note that \eqref{eq:elasticity-impact} and \eqref{eq:max-diss-impulse-KKT:stationarity} are also expressed as complementarity constraints. Finally, we constrain the initial and terminal states $x_{\mr{i}}$ and $x_{\mr{f}}$.

The CITO problem is then given by
\begin{subequations}
\begin{align}
\underset{\genfrac{}{}{0pt}{}{\scriptstyle\tilde{x}^+_k,~\tilde{x}^-_k,~U_k}{\scriptstyle S_k,~\Phic{k},~\Gamma_k}}{\mr{minimize}}~&~\sum_{k=1}^{N-1}L_k(x_k^+,U_k,S_k)\label{eq:cito-problem-obj}\\
\mr{subject~to}~&~\tilde{x}{}^-_{k+1}=\tilde{F}(\tilde{x}^+_k,U_k,S_k),\label{eq:cito-augdyn}\\
 &~\Psi^{\delta,\epsilon}(y_{k+1},y_k) \le \zeros{n^\Psi}, \\
 &~U_{k}\in\mc{U},~S_k\in\mc{S}, \\
 &~\qquad k= 1,\ldots,N-1,\notag\\
 &~\tilde{G}(\tilde{x}^+_{k},\tilde{x}^{-}_{k},\Phic{k}) = \zeros{n^{\tilde{x}}},\label{eq:cito-impulse}\\
 &~\Theta^\delta(x^+_k,x^-_k,\Phic{k},\Gamma_k) \le \zeros{n^\Theta},\\
 &~\qquad k=1,\ldots,N,\notag\\ 
 &~x^-_1=x_{\mr{i}},~x^+_N=x_{\mr{f}}.
\end{align}\label{eq:cito-problem}%
\end{subequations}
The cost function \eqref{eq:cito-problem-obj} can represent various objectives, such as control effort, reference tracking, or maneuver time.
\begin{figure*}[t!] 
    \centering 
    \begin{subfigure}[t]{\textwidth} 
        \centering 
        \includegraphics[width=.8\linewidth]{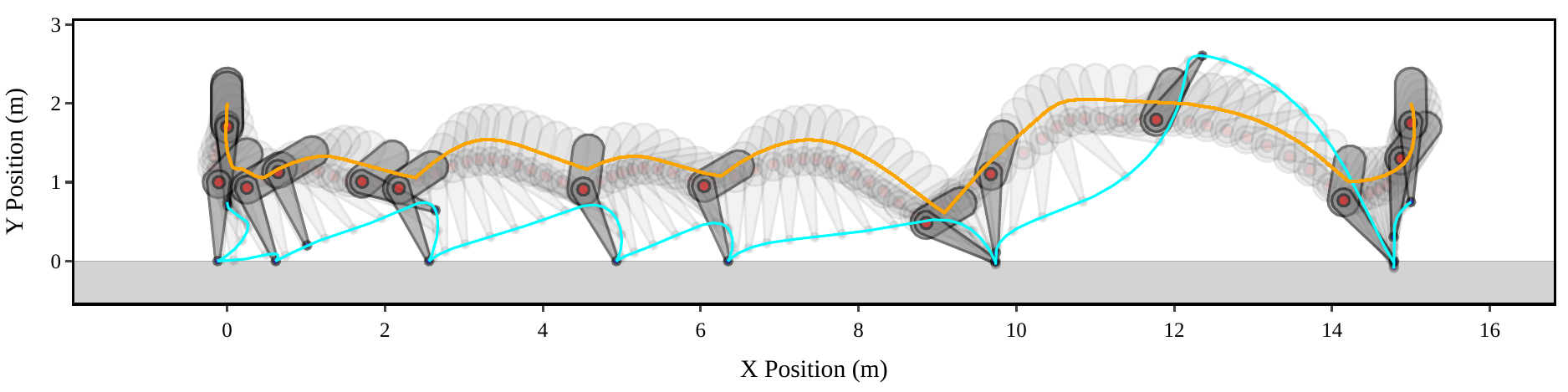}
        \caption{Gait produced by a monoped locomotion solution. The path of the center of mass (CoM) of the upper leg is shown by the \textcolor{orange}{orange} trace, and the path of the contact point (end of the lower leg) is shown by the \textcolor{cyan}{cyan} trace. The pose at grid nodes are displayed with higher opacity.} 
        \label{fig:monoped-gait} 
    \end{subfigure} 
    \begin{subfigure}[t]{.48\textwidth} 
        \centering 
        \includegraphics[width=\linewidth]{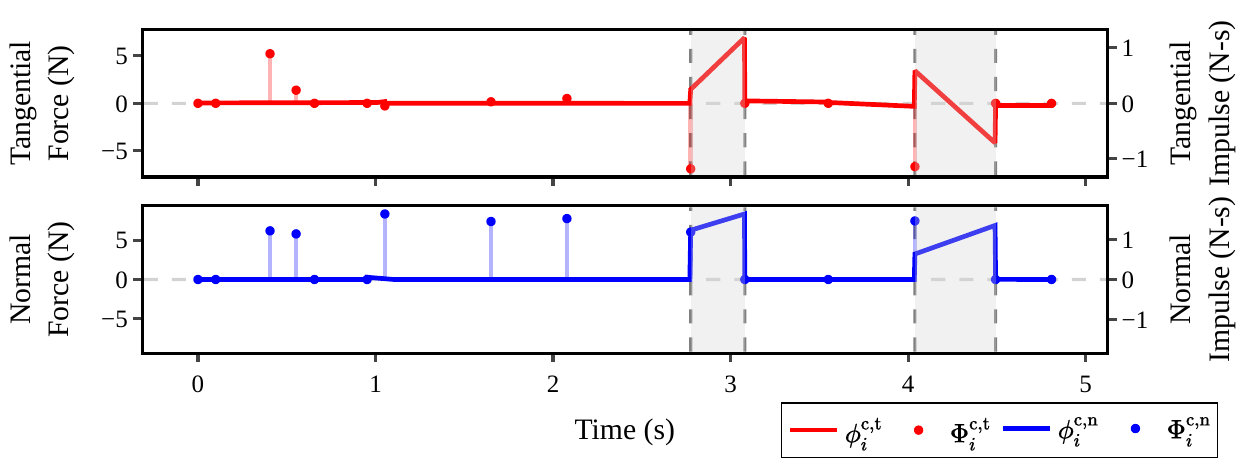} 
        \caption{Time histories of the normal and tangential contact forces and impulses. Grid intervals with sustained contact are denoted by gray regions.} \label{fig:monoped-contact} 
    \end{subfigure} 
    \hspace{.01\textwidth} 
    \begin{subfigure}[t]{.48\textwidth} 
        \centering 
        \includegraphics[width=\linewidth]{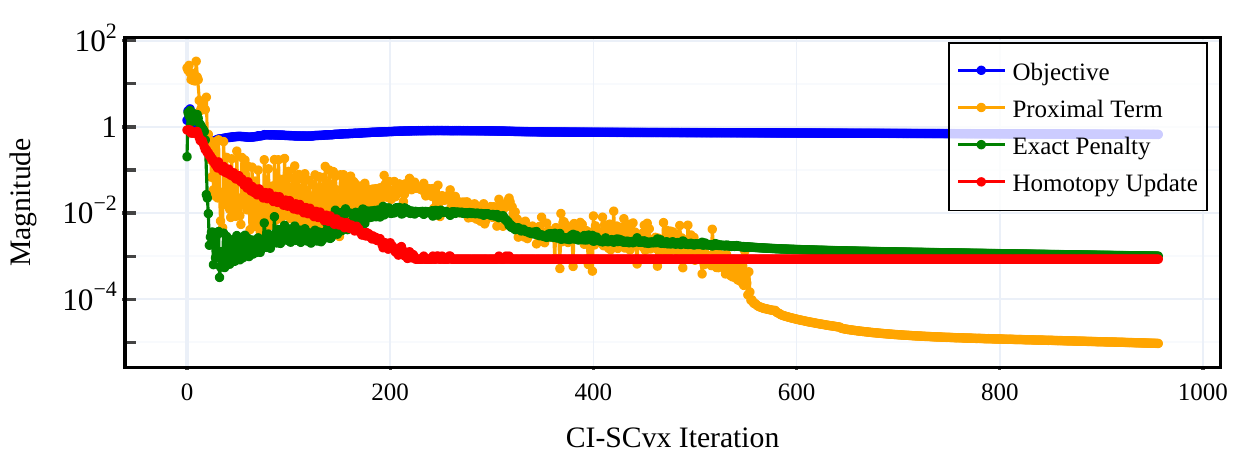} 
        \caption{History of convergence metrics across {\ciscvx} iterations, including the subproblem objective terms and homotopy tightness parameter $\delta_j$.} \label{fig:monoped-scvx} 
    \end{subfigure} 
    \caption{Monoped locomotion solution on a flat surface, produced by {\ciscvx}.}
    \label{fig:monoped-all}
    \vspace{-6mm}
\end{figure*}
\section{Sequential Convex Programming with Backtracking Homotopy}
To solve the nonconvex problem \eqref{eq:cito-problem}, we employ sequential convex programming (SCP), specifically the prox-linear method with exact penalization of nonconvex constraints \cite[Sec.~4]{elango2025ctscvx}. Each SCP iteration solves a convex subproblem in which the nonconvex constraints are linearized and penalized with an $\ell_1$ (exact) penalty, the convex constraints from \eqref{eq:cito-problem} are kept intact, and a quadratic proximal term penalizes deviation from the previous iterate. With suitable weights for the proximal and exact-penalty terms, the iterates converge to a stationary point of the penalized nonconvex problem. If that point is feasible for \eqref{eq:cito-problem}, it is a KKT point.

Applying SCP to problems with complementarity constraints requires special care. When using a relaxation such as \eqref{eq:FB-relax}, fixing the tightness parameter $\delta$ a priori often stalls the algorithm and prevents convergence. For reliable convergence, $\delta$ is typically driven toward zero via a homotopy over the SCP iterations \cite{malyuta2023fast,hall2021sequential}. We propose a homotopy scheme for $\delta$, embedded within SCP (referred to as embedded $\delta$-homotopy) with backtracking, in which $\delta$ is tightened progressively but can be temporarily relaxed to allow larger steps and broader search directions before committing to a smaller value. Algorithm~\ref{alg:homotopy} describes the approach. Let $z_j$, $\delta_j$, $J^{\mr{px}}_j$, and $J^{\mr{ep}}_j$ denote the subproblem solution, the tightness parameter, the proximal term, and the exact-penalty term, respectively, at SCP iterate $j$. Inputs to the algorithm are the initial tightness parameter $\delta_0$, an initial guess $z_0$, tolerances $\delta^{\min}(<\!\delta_0)$, $\epsilon^{\mr{px}}$, $\epsilon^{\mr{ep}}$, a shrinkage factor $\alpha\in(0,1)$, and a stall horizon $N^{\mr{stall}}$. The update rule for the tightness parameter  has three parts: (i) if neither $J^{\mr{px}}_j$ nor $J^{\mr{ep}}_j$ has decreased relative to iterate $M = \max(j-N^{\mr{stall}},1)$, set $\delta_{j+1}$ to the log-scale bisection between $\delta_j$ and $\delta_{M}$, (ii) else, if neither term has decreased relative to the previous iterate, set $\delta_{j+1}=\delta_j$; (iii) otherwise, shrink the tightness parameter as $\delta_{j+1}=\alpha\delta_j$.
%
\begin{algorithm}[H]
    \captionsetup{font=small}
    \caption{$\delta$-Homotopy}
    \footnotesize 
    \begin{algorithmic}[1]  
    \REQUIRE $z_0$, $\delta_0$, $\delta^{\min}$, $\epsilon^{\mr{px}}$, $\epsilon^{\mr{ep}}$, $\alpha$, $N^\mathrm{stall}$
    \ENSURE $z_{j}$
    \STATE $j\gets0,~J^{\mr{px}}_j\gets0,~J^{\mr{ep}}_j\gets0$
    \WHILE{$J_j^\mr{px}>\epsilon^{\mr{px}}\:\vee\:J_j^\mr{ep}>\epsilon^{\mr{ep}}\:\vee\:\delta_j > \delta^{\min}$}
            \STATE $z_{j+1},J^{\mr{px}}_{j+1},J^{\mr{ep}}_{j+1}\gets\texttt{solve\_subproblem}(z_j,\delta_j)$
            \STATE $M \gets \max(j-N^{\mathrm{stall}},1)$
            \IF{$J^\mathrm{px}_{j+1} > J^\mathrm{px}_{M}\:\vee\:J^\mathrm{ep}_{j+1} > J^\mathrm{ep}_{M}$}
                \STATE$\delta_{j+1} \gets \text{exp}\left(\frac{1}{2}\log(\delta_{j}\delta_{M})\right)$
            \ELSIF{$J^\mathrm{px}_{j+1} > J^\mathrm{px}_{j}\:\vee\:J^\mathrm{ep}_{j+1} > J^\mathrm{ep}_{j}$}
                \STATE $\delta_{j+1} \gets \delta_{j}$
            \ELSE
                \STATE $\delta_{j+1} \gets \alpha \delta_{j}$
            \ENDIF
            \STATE $j \gets j+1$\\
        \ENDWHILE
    \end{algorithmic}
    \label{alg:homotopy}
\end{algorithm}

\section{Experimental Results}\label{sec:experimental}
In this section, we detail the implementation of the proposed {\ciscvx} framework and present case studies on trajectory optimization for monopedal and bipedal multi-body robots, illustrating the versatility of {\ciscvx}. A notable advantage of our algorithm and formulation is its suitability for GPU-parallelized execution. The key computational task in solving \eqref{eq:cito-problem} is transcription, i.e., constructing and linearizing \eqref{eq:cito-augdyn} and \eqref{eq:cito-impulse}, which define the hybrid dynamical model due to contact and impact. We carry out transcription on the GPU using a JAX backend by exploiting two features of the formulation: (i) multiple shooting enables parallelization across grid intervals, and (ii) the maximal-coordinate representation enables parallel evaluation of model elements (joint constraints, contact Jacobians, etc.). Transcription on each grid interval uses Internal Numerical Differentiation (IND) \cite{quirynen2017numerical} with a custom Runge-Kutta-Fehlberg integrator and a first-order-hold (FOH) control input parameterization that is allowed to be discontinuous at grid nodes. Although more advanced JAX-compatible integrators (e.g., Diffrax) are available, in our experiments they were slower and did not improve SCP convergence. Beyond transcription, the remaining nonconvex constraints in \eqref{eq:cito-problem} are also linearized on the GPU.
\begin{figure*}[t!]
    \centering
    \begin{subfigure}[t]{.44\textwidth}
        \centering
        \includegraphics[width=\textwidth]{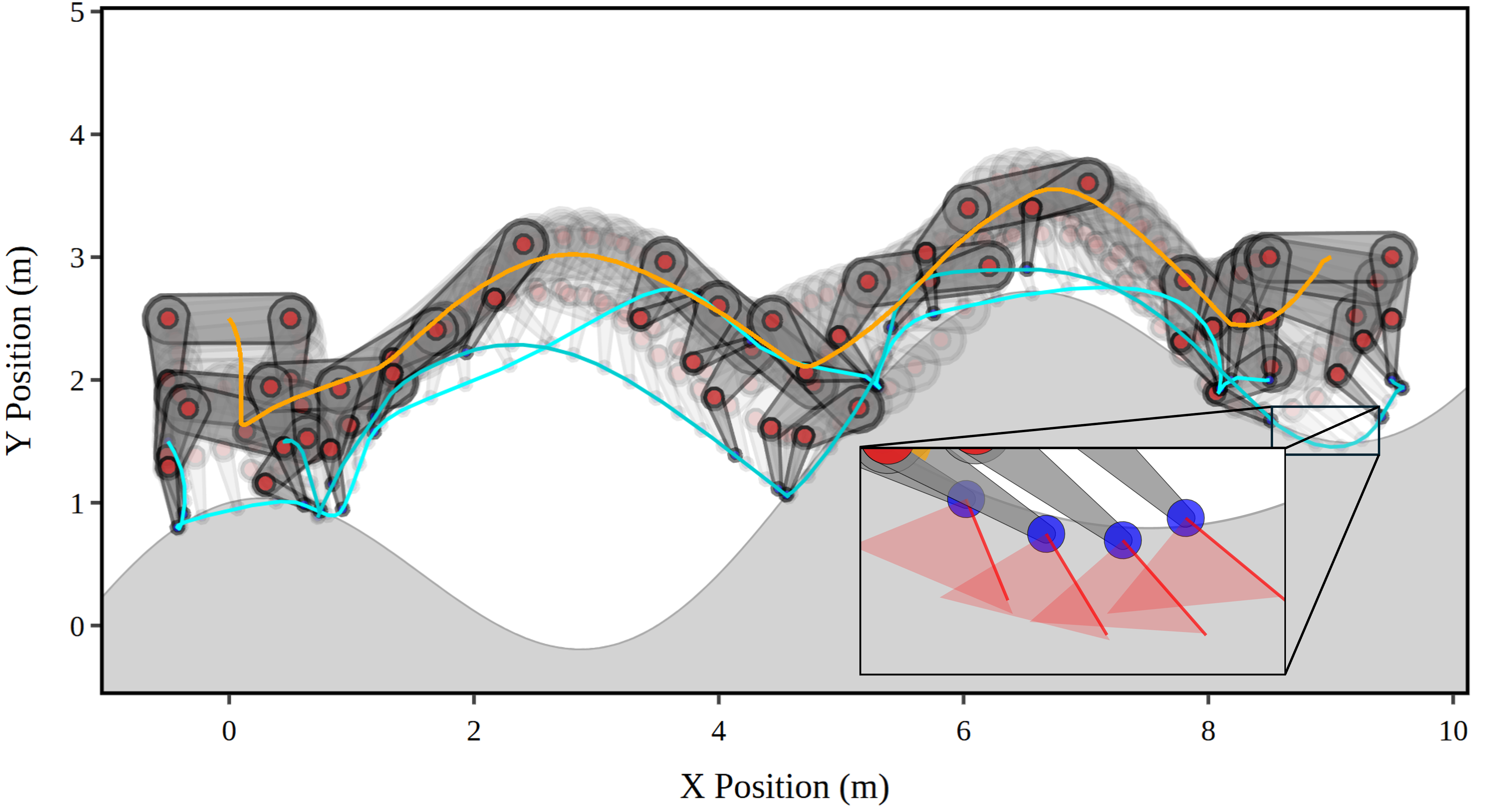}
        \caption{Gait produced by a biped locomotion solution. The path of the torso CoM is shown by the \textcolor{orange}{orange} trace, and the paths of the contact points (front and hind feet) are shown by the \textcolor{cyan}{cyan} traces. The zoomed view shows slipping with contact forces activating the friction cone.}
        \label{fig:biped-gait}
    \end{subfigure}
    \hspace{.05\textwidth}
    \begin{subfigure}[t]{.44\textwidth}
        \centering
        \includegraphics[width=\linewidth]{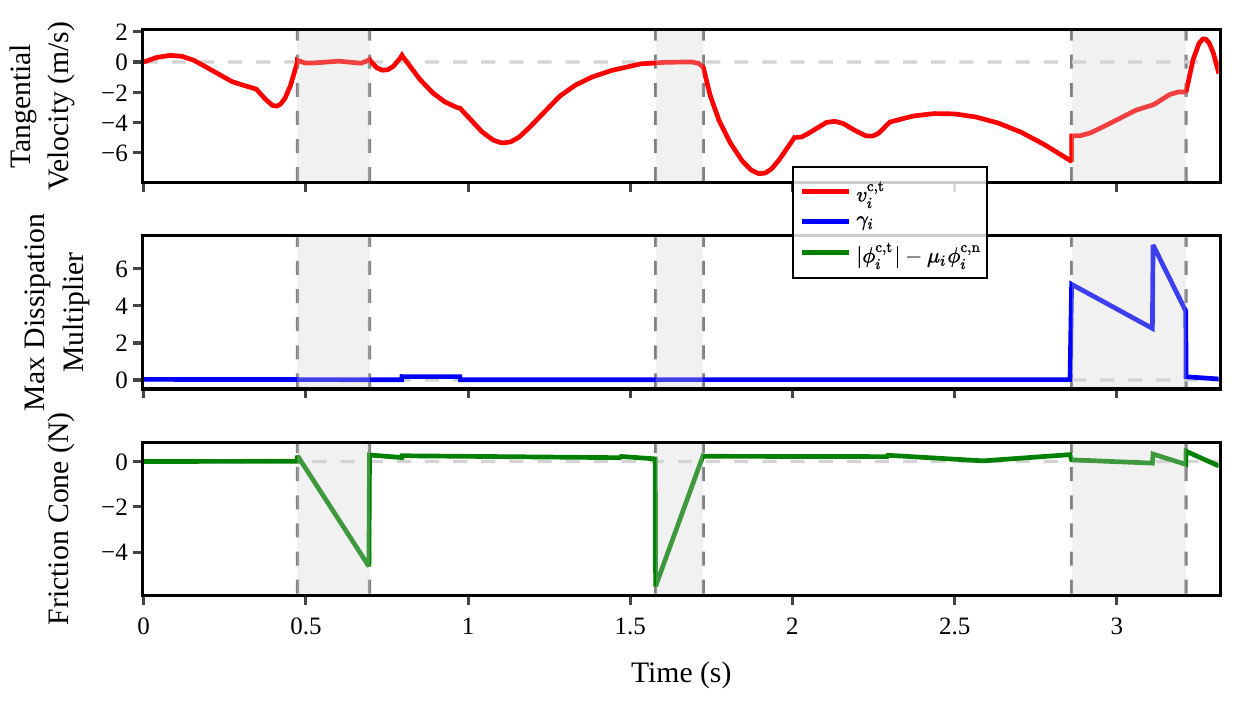}
        \caption{Time histories of tangential velocity at contact point, max dissipation Lagrange multiplier and the friction cone term for the front foot, showing transitions between stick and slip motion. Impulses are not displayed as they are approximately zero for this scenario.}
        \label{fig:biped-contacts}
    \end{subfigure}
    \caption{Biped locomotion solution on a sinusoidal surface, produced by {\ciscvx}.}\label{fig:biped} 
    \vspace{-4mm}
\end{figure*}
All linearized quantities are transferred from the GPU to the CPU and assembled, together with the convex constraints of \eqref{eq:cito-problem}, into a canonical-form second-order cone program (SOCP) using compressed sparse column (CSC) format for the constraint and objective matrices. This process is referred to as canonicalization. During assembly, we apply row-wise normalization (preconditioning) to all constraint matrices and affine scaling to the decision variables, which improves solver reliability and the practical convergence of SCP. The resulting sparse data are passed directly to the QOCO convex solver \cite{chari2025qoco}, allowing us to bypass general-purpose modeling tools (e.g., CVXPY) and produce high-fidelity solutions in near real time (tens of seconds). Figure \ref{fig:architecture} illustrates the full \ciscvx\:architecture.

All results are initialized by linearly interpolating poses at boundary conditions, setting velocities and joint torques to zero, and randomly sampling contact forces/impulses from the friction cone, which demonstrates that realistic trajectories can be obtained from naive initializations. A grid size of $N=15$ and integration step count of $10$ (per each grid interval) is chosen for all cases, demonstrating that we can resolve complex motions under sparse discretization. Each scenario minimizes the cumulative squared joint torques, with proximal and exact-penalty weights in the SCP subproblems set to $2000$ and $50,000$, respectively. We also globally set $\delta_0=1$, $\delta^{\min}=10^{-3}$, $N^{\mr{stall}}=10$, $\alpha=0.85$, $\mu_i=1$ and $\epsilon_i^{\mathrm{r}}=0$, for $ i=1,\dots,n^\mathrm{c}$. All results are visualized with the Plotly library in Python and forward-simulated with time steps of $0.01$ s.
%
\subsection{Monoped on a Flat Surface}
We consider locomotion for a simple two-link monoped (Fig.~\ref{fig:monoped-all}) on a flat contact surface. Fig.~\ref{fig:monoped-gait} shows a largely periodic gait in which contact is predominantly instantaneous (impulsive) over the horizon. The corresponding impulses (Fig. \ref{fig:monoped-contact}) are nonzero over most of the trajectory, while contact forces remain zero, yielding an energy-efficient ``touch-and-go'' motion driven by torque minimization. Although impacts are inelastic, an apparent ``bouncing'' motion is produced due to the use of joint torques.  Sustained contact (gray window) occurs only in two intervals at the lift-off and landing points for the final lateral jump, at which point both the contact forces and impulses are activated. During this jump, the lower leg of the monoped lifts close to the upper leg to shift the system mass distribution.

Convergence metrics for {\ciscvx} are shown in Fig. \ref{fig:monoped-scvx}. The tightness parameter $\delta$ decreases steadily with minor oscillations due to the backtracking rule in Algorithm~\ref{alg:homotopy}. Solutions are obtained in roughly 40\,s on average, over about 800 {\ciscvx} iterations. We apply the termination criteria on scaled variables, which results in high-accuracy solutions.  Lower-accuracy yet physically reasonable trajectories for closed-loop tracking can be generated more quickly.
\subsection{Biped on a Sinusoidal Surface}
Our second case study considers a five-link biped traversing a sinusoidal surface (Fig.~\ref{fig:biped}). We approximate the signed distance of contact point to the contact surface as
\begin{equation}
    \signdist(\posc{i}(q)) \;\approx\; \frac{h^{\mathrm{c}}(\posc{i}(q))}{\|\nabla h^{\mathrm{c}}(\posc{i}(q))\|}.
\end{equation}
This choice (i) becomes arbitrarily accurate as $h^{\mathrm{c}}(\posc{i}(q))\!\to\!0$, (ii) is numerically stable far from the contact surface, and (iii) avoids solving a nonlinear optimization problem. This scenario showcases the capability of {\ciscvx} to produce solutions with stick–slip transitions. The resulting gait (Fig.~\ref{fig:biped-gait}) combines sticking and slipping along the uneven terrain to achieve feasible, efficient motion. In Fig.~\ref{fig:biped-contacts}, the tangential velocity is (approximately) zero during the first two persistent-contact phases (gray regions), indicating stick. In the third phase, slipping is activated over two grid intervals: the friction cone term approaches its boundary (i.e., $\abs{\mu_i\phicn{i}}-\norm{\phict{i}}\!\approx\!0$), the maximum dissipation multiplier $\gamma_i$ becomes nonzero, and the tangential velocity decreases toward zero. This behavior demonstrates that {\ciscvx} can automatically determine the appropriate contact mode.
\begin{figure*}[t!] 
    \centering 
    \label{fig:monoped} 
    \caption*{} 
    \begin{subfigure}[t]{\textwidth} 
        \centering 
        \includegraphics[width=.85\linewidth]{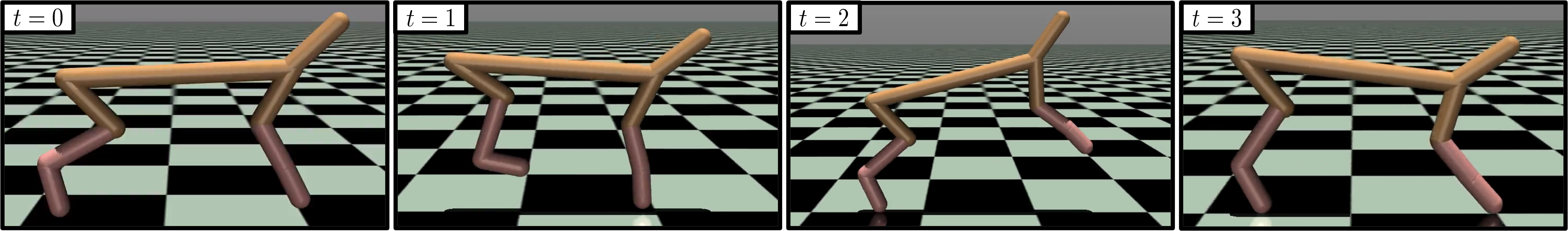} 
        \caption{Snapshots of the MJPC-tracked {\ciscvx} solution visualized with Gymnasium.} 
        \label{fig:cheetah-gait-mujoco} 
    \end{subfigure} 
    \begin{subfigure}[t]{.48\textwidth} 
        \centering 
        \includegraphics[width=\linewidth]{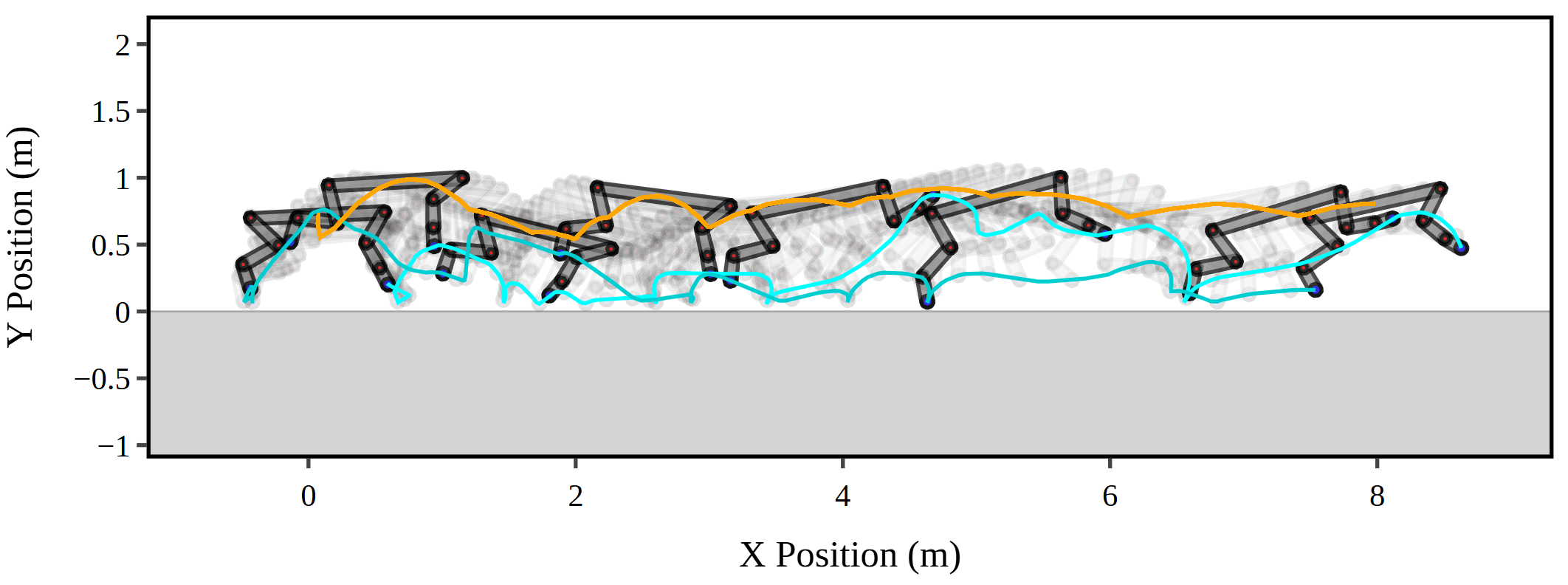} 
        \caption{Gait produced by a standalone MJPC forward rollout.} 
        \label{fig:cheetah-mjpc} 
    \end{subfigure} 
    \hspace{.01\textwidth}
    \begin{subfigure}[t]{.48\textwidth} 
        \centering 
        \includegraphics[width=\linewidth]{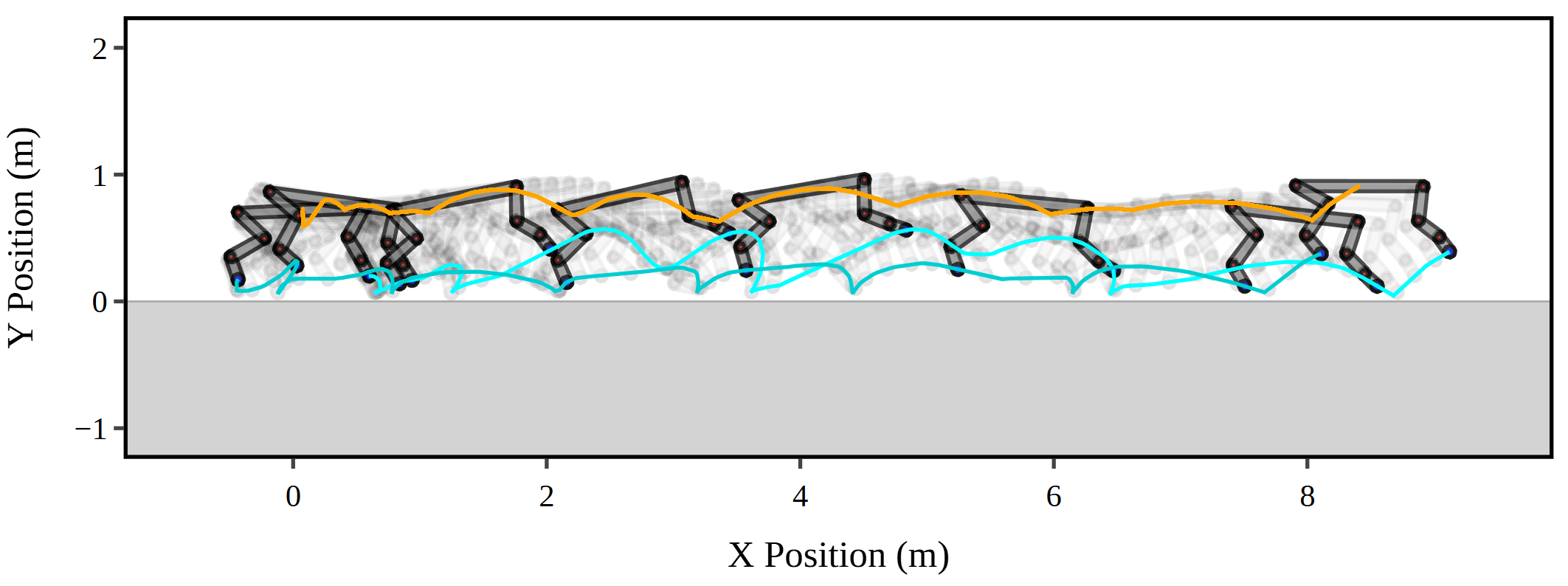} 
        \caption{Gait produced by MJPC tracking of a {\ciscvx} solution.} 
        \label{fig:cheetah-scvx} 
    \end{subfigure} 
    \caption{Comparison between solutions produced for the Gymnasium HalfCheetah environment. We note that the ``head'' of the model is not visualized in (b)-(c) as it is massless and does not contribute to the model dynamics.\label{fig:cheetah}}
    \vspace{-6mm}
\end{figure*}
\subsection{Demonstration with MuJoCo}
The previous case studies demonstrated the expressivity of our formulation in simplified settings. Here we assess the applicability of {\ciscvx} in a high-fidelity environment. Specifically, we use the Gymnasium HalfCheetah environment \cite{towers2024gymnasium} with a MuJoCo backend, together with Google DeepMind’s MuJoCo MPC (MJPC) toolbox and its predictive sampling algorithm \cite{howell2022predictive}. We employ MJPC for two purposes: (i) closed-loop tracking (with an LQR-style objective) of a given {\ciscvx} trajectory, and (ii) a baseline controller optimized with a standalone reward which balances the default environment reward (maximize forward velocity, minimize normalized joint torques) with a gait-phasing term that encourages alternating foot contacts. Sample rollouts are discarded if the torso exceeds a critical pitch angle or drops below a critical height (indicating tip-over and slumping failure modes).

Fig.~\ref{fig:cheetah} compares an MJPC forward rollout using the standalone reward to an MJPC rollout that tracks a {\ciscvx} solution, both on a horizon of $t^{\mathrm{f}}=3.2\,\mathrm{s}$. The {\ciscvx}-tracked rollout requires less reward shaping than the standalone MJPC baseline and still achieves a 6.64\% improvement in terms of the default environment objective as well as a 10.52\% improvement in the cumulative joint torque cost. Qualitatively, the {\ciscvx} trajectory yields a more stable gait, as illustrated in Fig.~\ref{fig:cheetah-scvx}.
\subsection{Timing Comparisons to Existing SCP Implementations}
Finally, we compare the fully GPU-accelerated, customized version of {\ciscvx} described in Section \ref{sec:experimental}, which we refer to as {\ciscvxcustom}, against a baseline version (\ciscvxbase), which inherits from implementation approaches in OpenSCvx v0.2.0 \cite{hayner2025continuous}, a state-of-the-art SCP toolbox for trajectory optimization. Specifically, we compare against OpenSCvx’s GPU parallelization and convex subproblem parsing pipeline (which relies on CVXPY). As shown in Fig.~\ref{fig:solvetimecompare}, {\ciscvxcustom} achieves speedups of approximately $6\times$ (monoped) and $19\times$ (biped). Notably, CVXPY fails to parse the subproblem for models more complex than the biped. The primary advantages of {\ciscvxcustom} relative to {\ciscvxbase} arise from (i) an efficient JAX-based implementation for transcription and nonconvex constraint linearization, and (ii) direct canonicalization of subproblem that bypasses general-purpose parsers such as CVXPY.

\begin{figure}[!htpb]
    \centering
    \includegraphics[width=0.75\linewidth]{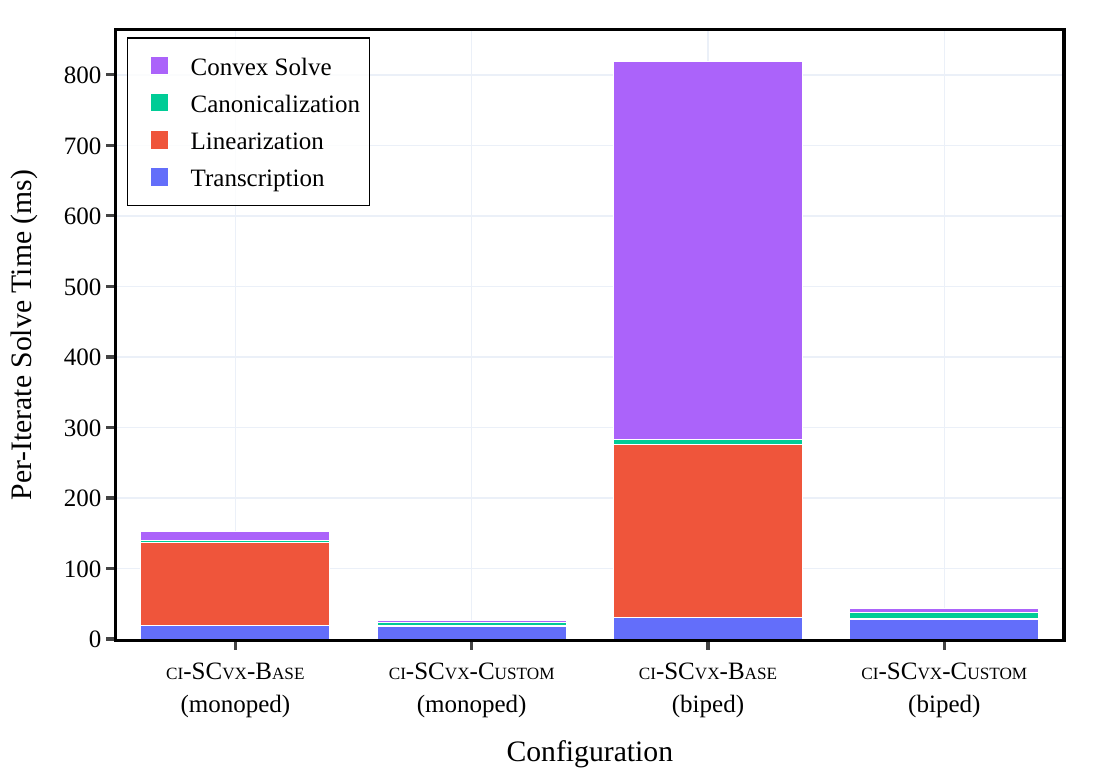}
    \caption{Timing results comparing the proposed {\ciscvxcustom} implementation against the baseline {\ciscvxbase}.}
    \label{fig:solvetimecompare}
    \vspace{-4mm}
\end{figure}
\section{Conclusion}
In this work, we developed a successive convexification–based framework for contact-implicit trajectory optimization (CITO), {\ciscvx}, which introduces integral cross-complementarity constraints to obtain high-fidelity solutions with sparse time discretization. The approach also employs a backtracking homotopy scheme to ensure reliable convergence. We demonstrated {\ciscvx} on complex locomotion tasks, showing performance and solve-time improvements over baselines in the literature. Future work will refine the homotopy strategy and solver implementation to reduce runtime and expand benchmarking to a broader set of models and trajectory optimization solvers.
\printbibliography

\end{document}